\newtheorem{theorem}{Theorem}
\newtheorem{corollary}{Corollary}
\title{Regular Time-series Generation using SGM\thanks{This paper is under review.}}
\author{
    Haksoo Lim\textsuperscript{\rm 1}, Minjung Kim\textsuperscript{\rm 2}, Sewon Park\textsuperscript{\rm 2}, Noseong Park\textsuperscript{\rm 1}
}
\begin{document}

\maketitle

\begin{abstract}
Score-based generative models (SGMs) are generative models that are in the spotlight these days. Time-series frequently occurs in our daily life, e.g., stock data, climate data, and so on. Especially, time-series forecasting and classification are popular research topics in the field of machine learning. SGMs are also known for outperforming other generative models. As a result, we apply SGMs to synthesize time-series data by learning conditional score functions. We propose a conditional score network for the time-series generation domain. Furthermore, we also derive the loss function between the score matching and the denoising score matching in the time-series generation domain. Finally, we achieve state-of-the-art results on real-world datasets in terms of sampling diversity and quality.
\end{abstract}

\noindent 

\section{Introduction}

SGMs now show remarkable results throughout various fields, including image generation, voice synthesis, etc.~\cite{ahmed2010timeforecast,fu2011time, fawaz2019dltime}. Let $\{(\mathbf{x}_i, t_i)\}_{i=1}^N$ be a time-series sample which consists of $N$ observations. In many cases, however, time-series samples are incomplete and/or the number of samples is insufficient, in which case training machine learning models cannot be fulfilled in a robust way. To overcome the limitation, time-series synthesis has been studied actively recently. These synthesis models have designed in various ways, including variational autoencoder (VAE) and generative adversarial network (GAN)~\cite{desai2021timevae, yoon2019timegan}.

In the image generation field, score-based generative models (SGMs) (or diffusion models) have recently attracted much attention for their superior sampling quality and diversity over other generative paradigms ~\cite{song2021SDE,xiao2022tackling}. Being inspired by their results, many researchers attempted to apply SGMs to other generation tasks, such as medical image processing and voice synthesis~\cite{DBLP:journals/corr/abs-2103-16091,song2022solving}, etc.

\begin{table}[t]
\centering
\resizebox{0.98\columnwidth}{!}{
\begin{tabular}{c|c|c|c}
    \hline
    Method & Domain & Type  & Target score function \\
    \hline
    TimeGrad & Forecasting & Diffusion & ${\nabla}\text{log}p(\textbf{x}_t^s|\textbf{h}_{t-1})$\\
    ScoreGrad & Forecasting & Energy & ${\nabla}\text{log}p(\textbf{x}_t^s|\textbf{x}_t)$\\
    CSDI & Imputation & Diffusion & ${\nabla}\text{log}p(\textbf{x}_{ta}^s|\textbf{x}_{co})$\\
    \hline
    TSGM (Ours) & Generation & SGM & ${\nabla}\text{log}p(\textbf{h}_{t}^s|\textbf{h}_{t})$\\
    \hline
\end{tabular}}
\caption{Qualitative comparison among score-based methods for time-series}
\label{table4}
\end{table}

Although there exist several efforts to generate time-series, according to our survey, there is no research using SGMs --- there exist time-series SGMs only for forecasting and imputation~\cite{rasul2021timegrad, yan2021scoregrad, tashiro2021csdi} (cf. Table~\ref{table4}). Therefore, we extend SGMs into the field of time-series synthesis. Our time-series synthesis is technically different from time-series forecasting, which forecast future observations given past observations, and time-series imputation, which given a time-series sample with missing observations, fills out those missing observations. We discuss about the differences in the related work section in detail. Unlike the image generation which generates each image independently, in addition, time-series generation must generate each observation considering its past generated observations, i.e., conditional sampling. To this end, in this paper, we propose the method of time-series generation using conditional score-based generative model (TSGM).

We design the first conditional score network on time-series generation, which learns the gradient of the conditional log-likelihood with respect to time. We also design a denoising score matching on time-series generation --- existing SGM-based time-series forecasting and imputation methods also have their own denoising score matching definitions, but our denoising score matching differs from theirs due to the task difference (cf. Table~\ref{table4}). Our TSGM can be further categorized into two types depending on the used stochastic differential equation type: VP, and subVP. 

We conduct in-depth experiments with 5 real-world datasets and 2 key evaluation metrics --- therefore, there are, in total, 10 different evaluation cases. Our specific choice of 8 baselines includes almost all existing types of time-series generative paradigms, ranging from VAEs to GANs. Our proposed method shows the best generation quality in all but two cases. We also visualize real and generated time-series samples onto a latent space using t-SNE~\cite{maaten2008tsne} and those visualization results intuitively show that our method's generation diversity is also the best among those baselines. Our contributions can be summarized as follows:
\begin{enumerate}
    \item We, for the first time, propose a SGM-based time-series synthesis method (although there exist SGM-based time-series forecasting and imputation methods).
    \item We derive our own denoising score matching mechanism which is different other existing ones, considering the fully recurrent nature of our time-series generation.
    \item We conduct comprehensive experiments with 5 real-world datasets and 8 baselines. Overall, our proposed method shows the best generation quality and diversity.
\end{enumerate}

\section{Related Work and Preliminaries}

We review the literature for SGMs and time-series generation, followed by preliminary knowledge.

\subsection{Score-based Generative Models}

SGMs are one of the most popular image generation methods. It has several advantages over other generative models for its generation quality, computing exact log-likelihood, and controllable generation without extra training. For its remarkable results, lots of researchers try to apply them to other fields, e.g., voice synthesis~\cite{DBLP:journals/corr/abs-2103-16091}, medical image process~\cite{song2022solving}, etc. SGMs consists of the following two procedures: i) they first add Gaussian noises into a sample, ii) and then remove the added noises to recover new sample. Those two processes are called as forward and reverse process, respectively.

\subsubsection{Forward Process}

At first, SGMs add noises with the following stochastic differential equation (SDE):
\begin{align}
    d\textbf{x}^s=\textbf{f}(s,\textbf{x}^s)ds+g(s)d\textbf{w}, \qquad s \in [0,1],
\end{align}
where ${\textbf{w} \in \mathbb{R}^n}$ is $n$ dimensional Brownian motion, $\textbf{f}(s,\cdot) : \mathbb{R}^n \rightarrow \mathbb{R}^n$ is vector-valued drift term, and $g : [0,1] \rightarrow \mathbb{R}$ is scalar-valued diffusion functions. Here after, we define $\textbf{x}^s$ as a noisy sample diffused at time $s \in [0,1]$ from an original sample $\textbf{x} \in \mathbb{R}^n$. Therefore, $\textbf{x}^s$ can be understood as a stochastic process following the SDE. There are several options for $\textbf{f}$ and $g$: variance exploding(VE), variance preserving(VP), and subVP. ~\cite{song2021SDE} proved that VE and VP are continuous generalizations of the two discrete diffusion methods,~\cite{sohl2015ddpm, ho2020ddpm}  and ~\cite{song2019smld}. In addition, the author further suggested the subVP method, which is a modified version of VP.

SGMs run the forward SDE with sufficiently large $N$ steps to make it sure that the diffused sample converges to a Gaussian distribution at the final step. The score network $M_{\theta}(s,\textbf{x}^s)$ learns the gradient of the log-likelihood $\nabla_{\textbf{x}^s} \text{log}p(\textbf{x}^s)$, which will be used in the reverse process.

\subsubsection{Reverse Process}

For each forward SDE from $s=0$ to $1$,~\cite{anderson1982reverse} proved that there exists the following corresponding reverse SDE:
\begin{align*}
    d\textbf{x}^s=[\textbf{f}(s,\textbf{x}^s)-g^2(s)\nabla_{\textbf{x}^s}{\text{log}p(\textbf{x}^s)}]ds+g(s)d\bar{\textbf{w}}.
\end{align*}

The formula suggests that if knowing the score function, $\nabla_{\textbf{x}^s}{\text{log}p(\textbf{x}^s)}$, we can recover real samples from the prior distribution $p_1(\textbf{x}) \sim \mathcal{N}(\mu, \sigma^2)$, where $\mu, \sigma$ vary depending on the forward SDE type.

\subsubsection{Training and Sampling}

In order for the model $M$ to learn the score function, the model has to optimize the following loss function:
\begin{align*}
&L(\theta) = \mathbb{E}_{s}\{\lambda(s)\mathbb{E}_{\textbf{x}^s}[\left\|M_{\theta}(s,\textbf{x}^s)-{\nabla}_{\textbf{x}^s}\text{log}p(\textbf{x}^s)\right\|_2^2]\},
\end{align*}where $s$ is uniformly sampled over $[0,1]$ with an appropriate weight function $\lambda(s):[0,1]\rightarrow \mathbb{R}$.
However, using the above formula is problematic since we do not know the exact gradient of the log-likelihood. Thanks to~\cite{vincent2011matching}, the loss can be substituted with the following denoising score matching loss:
\begin{align*}
L^*(\theta)=\mathbb{E}_{s}\{\lambda(s)\mathbb{E}_{\textbf{x}^0}\mathbb{E}_{\textbf{x}^s|\textbf{x}^0}[\left\|M_{\theta}(s,\textbf{x}^s)-{\nabla}_{\textbf{x}^s}\text{log}p(\textbf{x}^s|\textbf{x}^0)\right\|_2^2]\}.
\end{align*}
Since SGMs use an affine drift term, the transition kernel $\text{p}(\textbf{x}^s|\textbf{x}^0)$ follows a certain Gaussian distribution~\cite{sarkka2019applied} and therefore, ${\nabla}_{\textbf{x}^s}\text{log}p(\textbf{x}^s|\textbf{x}^0)$ can be analytically calculated.

\subsection{Time-series Generation and SGMs}
\subsubsection{Time-series Generation}

In order to synthesize time-series $\textbf{x}_{1:T}$, unlike other generation tasks, we must generate each observation $\textbf{x}_t$ at time $t \in [2:T]$ considering its previous history $\textbf{x}_{1:t-1}$. One can train neural networks to learn the conditional likelihood $\text{p}(\textbf{x}_t|\textbf{x}_{1:t-1})$ and generate each $\textbf{x}_t$ recursively using it. There are several time-series generation papers, and we introduce their ideas.

TimeVAE~\cite{desai2021timevae} is a variational autoencoder to synthesize time-series data. This model can provide interpretable results by reflecting temporal structures such as trend and seasonality in the generation process. TimeGAN~\cite{yoon2019timegan} uses a GAN architecture to generate time-series. First, it trains an encoder and decoder, which transform a time-series sample $\textbf{x}_{1:T}$ into latent vectors $\textbf{h}_{1:T}$ and recover them by using a recurrent neural network (RNN). Next, it trains a generator and discriminator pair on latent space, by minimizing discrepancy between ground-truth  ${p}(\textbf{x}_t|\textbf{x}_{1:t-1})$ and synthesized $\hat{p}(\textbf{x}_t|\textbf{x}_{1:t-1})$. Since it uses an RNN-based encoder, it can efficiently learn the conditional likelihood ${p}(\textbf{x}_t|\textbf{x}_{1:t-1})$ by treating it as ${p}(\textbf{h}_t|\textbf{h}_{t-1})$, since $\textbf{h}_t\sim\textbf{x}_{1:t}$ under the regime of RNNs. Therefore, it can generate each observation $\textbf{x}_t$ considering its previous history $\textbf{x}_{1:t-1}$. However, GAN-based generative models are vulnerable to the issue of mode collapse~\cite{xiao2022tackling} and unstable behavior problems during training~\cite{chu2020unstable}. There also exist GAN-based methods to generate other types of sequential data, e.g., video, sound, etc~\cite{esteban2017rcgan,mogren2016crnngan,xu2020cotgan,donahue2019wavegan}. In our experiments, we also use them as our baselines for thoroughly evaluating our method.

\subsubsection{SGMs on Time-series}

Although there exist a little research work using their own conditional score networks, TimeGrad~\cite{rasul2021timegrad} and ScoreGrad~\cite{yan2021scoregrad} are for time-series forecasting, and CSDI~\cite{tashiro2021csdi} is for time-series imputation.

In TimeGrad~\cite{rasul2021timegrad}, the authors used a diffusion model, which is a discrete version of SGMs, to forecasting future observations given past observations by minimizing the following objective function:
\begin{align*}
    \sum_{t=t_0}^T -\text{log}p_{\theta}(\textbf{x}_t|\textbf{x}_{1:t-1},\textbf{c}_{1:T}),
\end{align*}
where $\textbf{c}_{1:T}$ means the covariates of $\textbf{x}_{1:T}$. The above formula assume that we already know $\textbf{x}_{1:t_0-1}$, and by using an RNN encoder, $(\textbf{x}_{1:t}$,$\textbf{c}_{1:T})$ can be encoded into $\textbf{h}_t$. After training, the model forecasts future observations recursively. After taking  $\textbf{x}_{1:t}$, it encodes with $\textbf{c}_{1:T}$ into $\textbf{h}_t$, and forecast the next step observation $\textbf{x}_{t+1}$ from the previous condition $\textbf{h}_t$.

In the perspective of SGMs, TimeGrad and ScoreGrad can be regarded as methods to train the following conditional score network $M$:
\begin{align*}
    \sum_{t={t_0}}^T\mathbb{E}_{s}\mathbb{E}_{\textbf{x}_{t}^s} \left\|M_{\theta}(s,\textbf{x}_{t}^s,\textbf{h}_{t-1})-{\nabla}_{\textbf{x}_t^s}\text{log}p(\textbf{x}_t^s|\textbf{h}_{t-1})\right\|_2^2.
\end{align*}
We denote $\textbf{x}_t^s$ as $\textbf{x}_t$ at step $s$. Note that $s$ is uniformly sampled from $[0,1]$ as in ~\cite{song2021SDE}.

\cite{yan2021scoregrad} proposed an energy-based generative method for forecasting. Almost all notations and ideas are the same as those in~\cite{rasul2021timegrad}, except that it generalizes diffusion models into the energy-based field. The used training loss is as follows:
\begin{align*}
    \sum_{t={t_0}}^T\mathbb{E}_{s}\mathbb{E}_{\textbf{x}_{t}}\mathbb{E}_{\textbf{x}_{t}^s|\textbf{x}_{t}}\left\|M_{\theta}(s,\textbf{x}_{t}^s,\textbf{x}_{1:t-1})-{\nabla}_{\textbf{x}_t^s}\text{log}p(\textbf{x}_t^s|\textbf{x}_{t})\right\|_2^2.
\end{align*}

By using a continuous model,~\cite{yan2021scoregrad} achieved good results outperforming~\cite{rasul2021timegrad}. ~\cite{tashiro2021csdi} proposed a general diffusion framework which can be applied to various domains, including not only imputation, but also forecasting and interpolations. CSDI reconstructs an entire sequence at once, not recursively. Although the model takes a diffusion framework, their loss can be written as follows:
\begin{align*}
    \mathbb{E}_{s}\mathbb{E}_{\textbf{x}_{ta}^s}\left\|M_{\theta}(s,\textbf{x}_{ta}^s,\textbf{x}_{co})-{\nabla}\text{log}p(\textbf{x}_{ta}^s|\textbf{x}_{co})\right\|_2^2,
\end{align*}
\noindent where $\textbf{x}_{co}$ and $\textbf{x}_{ta}$ are conditions and imputation targets, respectively. By training a score network using the above loss, the model generates an entire sequence from a partial sequence. 

Although ~\cite{rasul2021timegrad, yan2021scoregrad} earned the state-of-the-art results for forecasting and imputation, we found that they are not suitable for our generative task due to the fundamental mismatch between their score function definitions and our task (cf. Table~\ref{table4}). For instance, TimeGrad generates future observations given the hidden representation of past observations $\textbf{h}_{t-1}$, i.e., a representative forecasting structure.

As a remedy, we propose to optimize a conditional score network by using the following denoising score matching:
\begin{align*}
    \mathbb{E}_{s}\mathbb{E}_{\textbf{h}_{1:T}}\sum_{t=1}^T \left\|M_{\theta}(s,{\textbf{h}}_{t}^s,\textbf{h}_{t-1})-{\nabla}_{\textbf{h}_{t}^s}\text{log}p(\textbf{h}_{t}^s|\textbf{h}_{t})\right\|_2^2.
\end{align*}
We denote $\textbf{h}_0 = \textbf{0}$. Unlike other models~\cite{rasul2021timegrad, yan2021scoregrad, tashiro2021csdi} which resort to existing known proofs, we prove the correctness of our denoising score matching loss in Theorem~\eqref{thm1}.

\section{Proposed Methods : TSGM}

TSGM consists of three networks: an encoder, a decoder, and a conditional score network. Firstly, we train the encoder and the decoder to embed time-series into a latent space. Next, using the pre-trained encoder and decoder, we train the conditional score network. The conditional score network will be used for sampling fake time-series.

\subsection{Encoder and Decoder}
The encoder and decoder map time-series to a latent space and vice versa. Let $\mathcal{X}$ and $\mathcal{H}$ denote a data space and a latent space. Next, $e$ and $d$ are an embedding function mapping $\mathcal{X}$ to $\mathcal{H}$ and vice versa. We define $\mathbf{x}_{1:T}$ as time-series sample with a length of $T$, and $\mathbf{x}_t$ is a multi-dimensional observation in $\mathbf{x}_{1:T}$ at time $t$. Similarly, $\mathbf{h}_{1:T}$ and $\mathbf{h}_t$ are embedded vectors. The encoder $e$ and decoder $d$ are constructed using recurrent neural networks. Since TSGM uses RNNs, both $e$ and $d$ are defined recursively as follows:
\begin{align}
    \mathbf{h}_t = e(\mathbf{h}_{t-1}, \mathbf{x}_t), \qquad \hat{\mathbf{x}}_t = d(\mathbf{h}_t),
\end{align}where $\hat{\mathbf{x}}_t$ denotes a reconstructed time-series sample at time $t$. After embedding real time-series data onto a latent space, we can train the conditional score network with its conditional log likelihood, which will be described in the following subsection. The encoder and decoder are pre-trained before our main training with the proposed denoising score matching.

\subsection{Conditional Score Network}

Unlike other generation tasks, e.g., image generation~\cite{song2021SDE}, tabular data synthesis~\cite{kim2022sos}, where each sample is independent, time-series observations are dependent to previous observations. Therefore, the score network for time-series generation must be designed to learn the conditional log likelihood on previous generated observations, which is more complicated than that in image generation. 

In order to learn the conditional log likelihood, we modify the popular U-net~\cite{ronneberger2015unet} architecture for our purpose. Since U-net has achieved various good results for previous generative tasks~\cite{song2019smld,song2021SDE}, we modify its 2-dimensional convolution layers to 1-dimensional ones for handling time-series observations. Then we concatenate diffused data with condition, and use the concatenated one and temporal feature as input to learn score function. More details are in the training and sampling procedures section.

\begin{figure*}[t]
\centering
\includegraphics[width=0.98\textwidth]{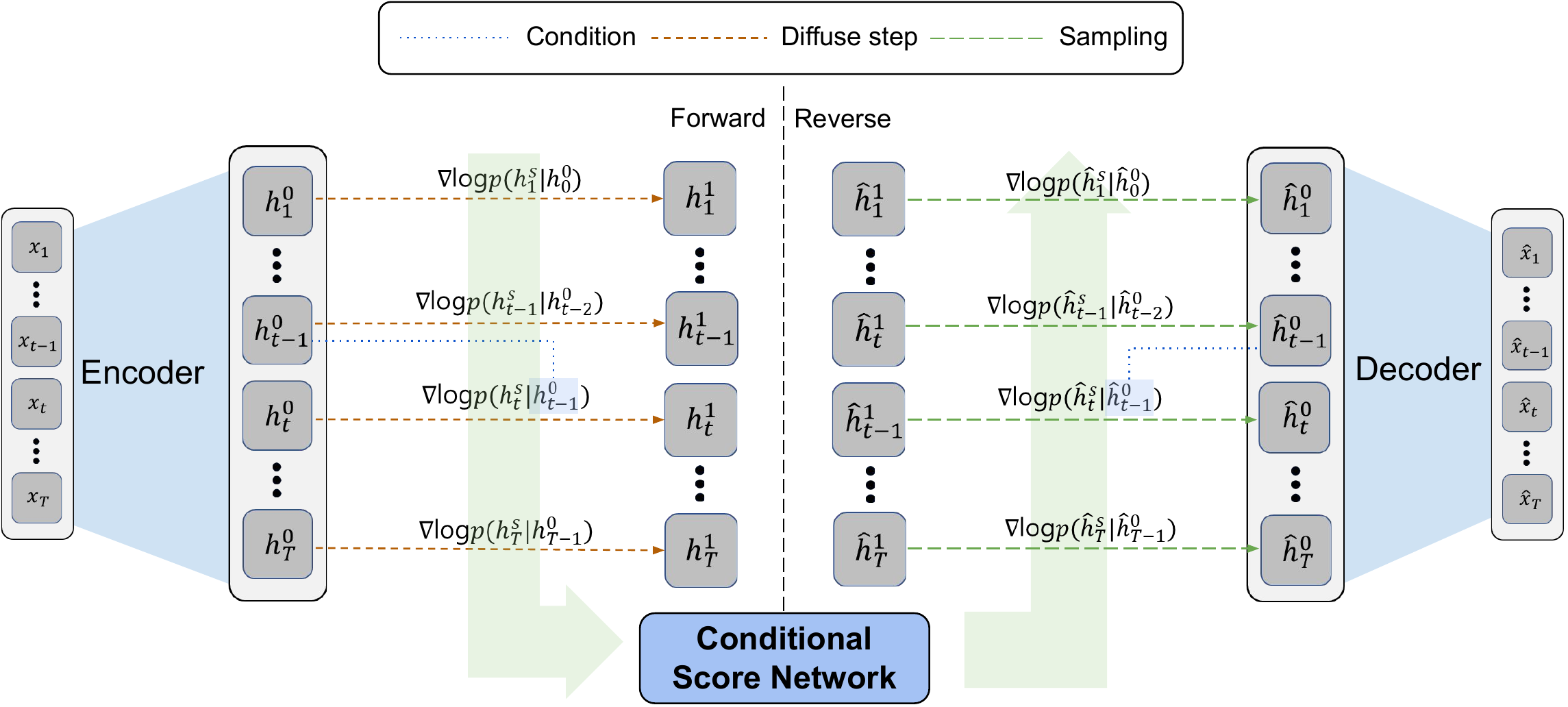}
\caption{The overall workflow of TSGM}
\label{fig1}
\end{figure*}

\subsection{Training Objective Functions}
We use two training objective functions. First, we train the encoder and the decoder using $L_{ed}$. Let $\mathbf{x}_{1:T} \sim p(\mathbf{x}_{1:T})$ and $\hat{\mathbf{x}}_{1:T}$ denote an input time-series sample and its reconstructed copy by the encoder-decoder process, respectively. Then, $L_{ed}$ means the following MSE loss between $\mathbf{x}_{1:T}$ and its reconstructed copy $\hat{\mathbf{x}}_{1:T}$:
\begin{align}
L_{ed} = \mathbb{E}_{\mathbf{x}_{1:T}}[\left\|\hat{\mathbf{x}}_{1:T} - \mathbf{x}_{1:T} \right\|_2^2].
\end{align}

Next, we define another loss $L_{score}$ to train the conditional score network, which is one of our main contributions. At time $t$ in $[1:T]$, we diffuse $\mathbf{x}_{1:t}$ through a sufficiently large number of steps of the forward SDE to a Gaussian distribution. Let $\mathbf{x}_{1:t}^s$ denotes a diffused sample at step $s \in [0,1]$ from $\mathbf{x}_{1:t}$. Then, the conditional score network $M_{\theta}({s, \mathbf{x}}_{1:t}^s, \mathbf{x}_{1:t-1})$ learns the gradient of the conditional log-likelihood with following $L_{1}$: 
\begin{small}
\begin{equation}
L_{1} = \mathbb{E}_{s}\mathbb{E}_{{\textbf{x}}_{1:T}}\left[\sum_{t=1}^{T}\lambda(s)l_{1}(t,s)\right], 
\label{equation4}
\end{equation}
\end{small}where 
\begin{footnotesize}
\begin{equation*}
l_{1}(t,s) = \mathbb{E}_{{\textbf{x}}_{1:t}^s}\left[\left\|M_{\theta}(s,{\textbf{x}}_{1:t}^s,\textbf{x}_{1:t-1})-{\nabla}_{{\textbf{x}}_{1:t}^s}\text{log}p({\textbf{x}}_{1:t}^s|\textbf{x}_{1:t-1})\right\|_2^2\right] 
\end{equation*}
\end{footnotesize}that ${\nabla}_{{\textbf{x}}_{1:t}^s}\text{log}p({\textbf{x}}_{1:t}^s|\textbf{x}_{1:t-1})$, where $\textbf{x}_{i}$ in depends on $\textbf{x}_{1:i-1}$ for each $i \in [2,t]$, is designed specially for time-series generation. Note that for our training, ${\textbf{x}}_{1:t}^s$ is sampled from ${p}({\textbf{\textbf{x}}}_{1:t}^s|\textbf{x}_{1:t-1})$ and $s$ is uniformly sampled from $[0,1]$. Thanks to following theorem, we can use the efficient denoising score loss $L_{2}$ to train the conditional score network. We set $\textbf{x}_0=\textbf{0}$. 

\begin{theorem}[Denoising score matching on time-series]\label{thm1}
$l_{1}(t,s)$ can be replaced by the following $l_{2}(t,s)$:
\begin{footnotesize}
\begin{align*}
l_{2}(t,s) = \mathbb{E}_{\textbf{\textsc{x}}_t}\mathbb{E}_{{\textbf{\textsc{x}}}_{1:t}^s}\left[\left\|M_{\theta}(s,{\textbf{\textsc{x}}}_{1:t}^s,\textbf{\textsc{x}}_{1:t-1})-{\nabla}_{{\textbf{\textsc{x}}}_{1:t}^s}\text{log}p({\textbf{\textsc{x}}}_{1:t}^s|\textbf{\textsc{x}}_{1:t})\right\|_2^2 \right],
\end{align*}\end{footnotesize}where $\textbf{\textsc{x}}_t$ and ${\textbf{\textsc{x}}}_{1:t}^s$ are sampled from ${p}(\textbf{\textsc{x}}_t|\textbf{\textsc{x}}_{1:t-1})$ and ${p}({\textbf{\textsc{x}}}_{1:t}^s|\textbf{\textsc{x}}_{1:t})$. Therefore, we can use an alternative objective, $L_{2} = \mathbb{E}_{s}\mathbb{E}_{{\textbf{\textsc{x}}}_{1:T}}\left[\sum_{t=1}^{T}\lambda(s)l_{2}(t,s)\right]$ instead of $L_{1}$
\end{theorem}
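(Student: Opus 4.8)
The plan is to adapt the classical denoising-score-matching identity of \cite{vincent2011matching} to the conditional, recurrent setting, showing that $l_1(t,s)$ and $l_2(t,s)$ differ only by a quantity independent of the parameters $\theta$. I would first fix the step $s$, the time index $t$, and the clean history $\textbf{x}_{1:t-1}$, and reduce the claim to a pointwise identity in these variables. Both $l_1(t,s)$ and $l_2(t,s)$ depend on the sample only through $\textbf{x}_{1:t-1}$ (in $l_2$ the observation $\textbf{x}_t$ is integrated out by $\mathbb{E}_{\textbf{x}_t}$), so once I establish the pointwise identity, applying the outer $\mathbb{E}_s\mathbb{E}_{\textbf{x}_{1:T}}$ and summing over $t$ transfers it to $L_1 = L_2 + \text{const}$, which suffices because a $\theta$-independent additive constant leaves the minimizer unchanged.

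The structural fact I would rely on is the marginalization identity
\begin{equation*}
p(\textbf{x}_{1:t}^s\mid\textbf{x}_{1:t-1}) = \int p(\textbf{x}_{1:t}^s\mid\textbf{x}_{1:t})\, p(\textbf{x}_t\mid\textbf{x}_{1:t-1})\, d\textbf{x}_t,
\end{equation*}
obtained from the law of total probability with $\textbf{x}_{1:t-1}$ held fixed, where $p(\textbf{x}_{1:t}^s\mid\textbf{x}_{1:t})$ is the forward transition kernel, which is Gaussian because the SDE uses an affine drift (as noted in the preliminaries). Expanding the squared norms in both $l_1$ and $l_2$ into the model term $\|M_\theta\|_2^2$, the cross term $-2\langle M_\theta, \nabla\log p\rangle$, and the target term $\|\nabla\log p\|_2^2$, I would treat the three pieces separately. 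For the model term, the inner integral $\mathbb{E}_{\textbf{x}_t}\mathbb{E}_{\textbf{x}_{1:t}^s\mid\textbf{x}_{1:t}}[\|M_\theta\|_2^2]$ collapses under the marginalization identity — since $M_\theta$ depends on $\textbf{x}_t$ only through $\textbf{x}_{1:t}^s$ and the fixed condition $\textbf{x}_{1:t-1}$ — to $\mathbb{E}_{\textbf{x}_{1:t}^s\mid\textbf{x}_{1:t-1}}[\|M_\theta\|_2^2]$, so the model terms of $l_1$ and $l_2$ coincide exactly.

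The crux is the cross term. Writing the $l_1$ cross term as $\int \langle M_\theta, \nabla_{\textbf{x}_{1:t}^s} p(\textbf{x}_{1:t}^s\mid\textbf{x}_{1:t-1})\rangle\, d\textbf{x}_{1:t}^s$ (the density cancels against $\nabla\log p = \nabla p / p$), I would push the gradient through the marginalization integral and re-expand via $\nabla p(\textbf{x}_{1:t}^s\mid\textbf{x}_{1:t}) = p(\textbf{x}_{1:t}^s\mid\textbf{x}_{1:t})\,\nabla\log p(\textbf{x}_{1:t}^s\mid\textbf{x}_{1:t})$, which reassembles precisely the $l_2$ cross term $\mathbb{E}_{\textbf{x}_t}\mathbb{E}_{\textbf{x}_{1:t}^s\mid\textbf{x}_{1:t}}[\langle M_\theta, \nabla\log p(\textbf{x}_{1:t}^s\mid\textbf{x}_{1:t})\rangle]$. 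With the model and cross terms equal, the only remaining discrepancy is between the target terms $\|\nabla\log p(\textbf{x}_{1:t}^s\mid\textbf{x}_{1:t-1})\|_2^2$ and $\|\nabla\log p(\textbf{x}_{1:t}^s\mid\textbf{x}_{1:t})\|_2^2$, both independent of $\theta$; hence $l_1(t,s) - l_2(t,s)$ is constant in $\theta$ and the theorem follows.

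The step I expect to be the main obstacle is justifying the interchange of the gradient $\nabla_{\textbf{x}_{1:t}^s}$ with the marginalization integral over $\textbf{x}_t$. This is a differentiation-under-the-integral-sign argument, which I would discharge by invoking the Gaussianity of the transition kernel $p(\textbf{x}_{1:t}^s\mid\textbf{x}_{1:t})$ for affine-drift SDEs: its smoothness and rapidly decaying tails supply an integrable dominating function uniform in a neighborhood of $\textbf{x}_{1:t}^s$, so dominated convergence legitimizes the swap. A secondary bookkeeping point is to confirm that the recurrent conditioning introduces no extra dependence — that $M_\theta$ sees only $(s, \textbf{x}_{1:t}^s, \textbf{x}_{1:t-1})$ and that the transition kernel factorizes cleanly given the fixed clean history — which I would verify directly from the definitions before assembling the three terms.
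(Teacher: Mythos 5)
Your proposal is correct and follows essentially the same route as the paper's own proof: both expand the squared norm into model, cross, and target terms, use the marginalization $p(\textbf{x}_{1:t}^s\mid\textbf{x}_{1:t-1})=\int p(\textbf{x}_{1:t}^s\mid\textbf{x}_{1:t})\,p(\textbf{x}_t\mid\textbf{x}_{1:t-1})\,d\textbf{x}_t$ to convert the cross and model terms into expectations over $\textbf{x}_t\sim p(\cdot\mid\textbf{x}_{1:t-1})$, and absorb the $\theta$-independent target terms into a constant. The only differences are cosmetic: the paper treats $t=1$ separately via the classical result of Vincent (your uniform argument subsumes it since $\textbf{x}_0=\textbf{0}$), and you explicitly justify the gradient--integral interchange by dominated convergence, a step the paper performs silently.
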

\begin{proof}
We first decompose $l_1(t,s)$ into three parts. We then transform each part by inducing the following result: $\mathbb{E}_{\textbf{x}_{1:t}^s}{\nabla}_{{\textbf{x}}_{1:t}^s}\text{log}p({\textbf{x}}_{1:t}^s|\textbf{x}_{1:t-1})=\mathbb{E}_{\textbf{x}_{t}}\mathbb{E}_{\textbf{x}_{1:t}^s}{\nabla}_{{\textbf{x}}_{1:t}^s}\text{log}p({\textbf{x}}_{1:t}^s|\textbf{x}_{1:t})$. Detailed proof is in Appendix.
\end{proof}

Next, we describe another corollary to formulate our loss.

\begin{corollary} Our target objective function, $L_{score}$, is defined as follows:
\begin{small}
\begin{align*}
L_{score} = \mathbb{E}_{s}\mathbb{E}_{\textbf{x}_{1:T}}\left[\sum_{t=1}^{T}\lambda(s)l^\star_2(t, s) \right],
\end{align*}where
\begin{align*}
l^\star_{2}(t,s) = \mathbb{E}_{{\textbf{\textsc{x}}}_{1:t}^s}\left[\left\|M_{\theta}(s,{\textbf{\textsc{x}}}_{1:t}^s,\textbf{\textsc{x}}_{1:t-1})-{\nabla}_{{\textbf{\textsc{x}}}_{1:t}^s}\text{log}p({\textbf{\textsc{x}}}_{1:t}^s|\textbf{\textsc{x}}_{1:t})\right\|_2^2\right].
\end{align*}
\end{small}Then, $L_{2}=L_{score}$ is satisfied.
\end{corollary}

\noindent The  proof  of  Corollary 1 is  given  in  Appendix. Note that the loss $L_{2}$ cannot be used in practice since it has to sample each $\textbf{x}_t$ using $p(\textbf{x}_t|\textbf{x}_{1:t-1})$ every time. So we use the loss $L_{score}$ for our training.

Since we pre-train the encoder and decoder, the encoder can embeds $\textbf{x}_{1:t}$ into $\textbf{h}_t \in \mathcal{H}$. Ideally, $\textbf{h}_t$ involves the entire information of $\textbf{x}_{1:t}$, but we want to consider more practical cases. Thus, we define $\mathcal{H}$ as a latent space having the past information by setting up window size $K$ as constant sequential length of each sample (details are in the experimental setup section). Therefore, $L_{score}$ can be re-written as follows with the embedding into the latent space:
\begin{equation}
 L_{score}^{\mathcal{H}} = \mathbb{E}_{s}\mathbb{E}_{\textbf{h}_{1:T}}\sum_{t=1}^{T}\left[\lambda(s)l_3(t, s)\right],
\end{equation}
\noindent with {\small $l_{3}(t, s) = \mathbb{E}_{\textbf{h}_{t}^s}\left[\left\|M_{\theta}(s,\textbf{h}_{t}^s,\textbf{h}_{t-1})-{\nabla}_{\textbf{h}_{t}^s}\text{log}p(\textbf{h}_{t}^s|\textbf{h}_{t})\right\|_2^2\right]$}. $L_{score}^{\mathcal{H}}$ is what we use for our experiments (instead of $L_{score}$). Until now, we introduced our target objective functions, $L_{ed}$ and $L_{score}^{\mathcal{H}}$. We note that we use the exactly same weight $\lambda(s)$ as that in~\cite{song2021SDE}. 

\subsection{Training and Sampling Procedures}

We explain the details about its training method with an intuitive example in Fig.~\ref{fig1}. At first, we pre-train both the encoder and decoder using $L_{ed}$. After pre-training them, we train the conditional score network. When training the conditional score network, we use the embedded hidden vectors produced by the encoder. After encoding an input $\mathbf{x}_{1:T}$, we obtain its latent vectors $\mathbf{h}_{1:T}$ --- we note that each hidden vector $\mathbf{h}_t$ has all the previous information on or before $t$ since the encoder is an RNN-based encoder. We use the following forward process ~\cite{song2021SDE}, where $t$ means the physical time of the input time-series in $[1:T]$ and $s$ denotes the time of the diffusion step :
\begin{align*}
    d\textbf{h}_t^s=\textbf{f}(s,\textbf{h}_t^s)ds+g(s)d\textbf{w}, \qquad s \in [0,1].
\end{align*}

During the forward process, the conditional score network reads the pair ($s$, $\mathbf{h}_t^s$, $\mathbf{h}_{t-1}$) as input and thereby, it can learns the conditional score function $\nabla \log p(\mathbf{h}_t^s | \mathbf{h}_{t-1})$ by using $L_{score}^{\mathcal{H}}$, where $\mathbf{h}_{0} = \mathbf{0}$. We train the conditional score network and the encoder-decoder pair alternately after the pre-training step. For some datasets, we found that train only the conditional score network achieve better results after pre-training the autoencoder. Therefore, $use_{alt}=\{True, False\}$ is a hyperparameter to set whether we use the alternating training method. We give the detailed training procedure in Algorithm~\ref{algorithm1}

\begin{algorithm}[tb]
\caption{Training algorithm}
\label{algorithm1}
\textbf{Input}: $\mathbf{x}_{1:T}$\\
\textbf{Parameter}: \\
$use_{alt}$ = A Boolean parameter to set whether to use the alternating training method.\\
$iter_{pre}$ = The number of iterations for pre-training\\
$iter_{main}$ = The number of iterations for training\\
\textbf{Output}: $Encoder, Decoder, M_{\theta}$
\begin{algorithmic}[1] 
\FOR{$iter \in \{1,...,iter_{pre}\}$}
\STATE Train $Encoder$ and $Decoder$ by using $L_{ed}$ 
\ENDFOR
\FOR{$iter \in \{1,...,iter_{main}\}$}
\STATE Train $M_{\theta}$ by using $L_{score}^{\mathcal{H}}$
\IF{$use_{alt}$}
\STATE Train the $Encoder$ and $Decoder$ by using $L_{ed}$
\ENDIF
\ENDFOR
\STATE \textbf{return} $Encoder, Decoder, M_{\theta}$
\end{algorithmic}
\end{algorithm}

After the training procedure, we use the following conditional reverse process:
\begin{align*}
    d\textbf{h}_t^s=[\textbf{f}(s,\textbf{h}_t^s)-g^2(s)\nabla_{\textbf{h}_t^s}{\text{log}p(\textbf{h}_t^s|\textbf{h}_{t-1})}]ds+g(s)d\bar{\textbf{w}},
\end{align*}where $s$ is uniformly sampled over $[0,1]$. The conditional score function in this process can be replaced with the trained score network $M_{\theta}(s,\textbf{h}_{t}^s,\textbf{h}_{t-1})$. The detailed sampling method is as follows:
\begin{itemize}
    \item At first, we sample $\textbf{z}_0$ from a Gaussian prior distribution and concatenate it with $\mathbf{h}_{0}$. 
    We consider the concatenated vector and temporal feature as inputs to the conditional score network and generates initial data $\hat{\textbf{h}}_1$ with the \textit{predictor-corrector} method~\cite{song2021SDE}.

    \item We repeat the following computation for every $1 < t \leq T$, i.e., recursive generation. After reading the previously generated samples $\hat{\textbf{h}}_{t-1}$ for $t \in [2,T]$, we sample $\textbf{z}_{t-1}$ from a Gaussian prior distribution and concatenate it with $\hat{\textbf{h}}_{t-1}$. We use it and temporal feature, $s$, as inputs to the conditional score network $M_{\theta}(s,\textbf{h}_{t}^s,\textbf{h}_{t-1})$ and synthesize the next latent vector $\hat{\textbf{h}}_{t}$ via the \textit{predictor-corrector} step.
\end{itemize}
  
\noindent Once the sampling procedure is finished, we can reconstruct ${\hat{\textbf{x}}}_{1:T}$ from $\hat{\textbf{h}}_{1:T}$ using the trained decoder at once.

\section{Experiments}
We describe our detailed experimental environments and results.

\subsection{Experimental Environments}

We use 5 real-world datasets from various fields with 8 baselines. We refer to Appendix for the detailed descriptions on our datasets, baselines, and other software/hardware environments. In particular, our collection of baselines covers almost all existing types of time series synthesis methods, ranging from RNNs to VAEs and GANs. For the baselines, we reuse their released source codes in their official repositories and rely on their designed training and model selection procedures.

\subsubsection{Hyperparameters}

Table~\ref{table6} shows the best hyperparameters. We define hidden dimension of inputs and temporal features as $d_{in}$ and $d_t$, respectively. For other settings, we use the default values in TimeGAN~\cite{yoon2019timegan} and VPSDE~\cite{song2021SDE}. We give the detailed search method in Appendix.

\begin{table}[h]
\centering
\begin{tabular}{c|c|c|c|c|c}
    \hline
    Dataset & $d_{in}$ & $d_{t}$ & $use_{alt}$ & $iter_{pre}$ & $iter_{main}$ \\
    \hline
    Stocks & 24 & 96 & True & 50000 & \multirow{5}{*}{40000}\\
    Energy & 56 & 56 & False & 100000 &\\
    Air & 40 & 80 & True & 50000 &\\
    AI4I & 24 & 96 & True & 50000 &\\
    Occupancy & 40 & 80 & False & 100000 &\\
    \hline
\end{tabular}
\caption{The best hyperparameters for our method}
\label{table6}
\end{table}

\begin{table*}[t]
\centering

\begin{tabular}{c|c|c|c|c|c|c}
    \hline
    & Dataset & Stocks & Energy & Air & AI4I & Occupancy\\
    \hline
    \parbox[t]{2mm}{\multirow{11}{*}{\rotatebox{90}{Disc. score}}} & TSGM-VP & .022$\pm$.005 & .221$\pm$.025 & \textbf{.122}$\pm$\textbf{.014} & .147$\pm$.005 & .402$\pm$.004 \\
    & TSGM-subVP & \textbf{.021}$\pm$\textbf{.008} & \textbf{.198}$\pm$\textbf{.025} & .127$\pm$.010 & .150$\pm$.010 & .414$\pm$.008 \\
    & T-Forcing & .226$\pm$.035 & .483$\pm$.004 & .404$\pm$.020 & .435$\pm$.025 & .333$\pm$.005 \\
    & P-Forcing & .257$\pm$.026 & .412$\pm$.006 & .484$\pm$.007 & .443$\pm$.026 & .411$\pm$.013 \\
    & TimeGAN & .102$\pm$.031 & .236$\pm$.012 & .447$\pm$.017 & \textbf{.070}$\pm$\textbf{.009} & .365$\pm$.014 \\
    & RCGAN & .196$\pm$.027 & .336$\pm$.017 & .459$\pm$.104 & .234$\pm$.015 & .485$\pm$.001 \\
    & C-RNN-GAN & .399$\pm$.028 & .499$\pm$.001 & .499$\pm$.000 & .499$\pm$.001 & .467$\pm$.009 \\
    & TimeVAE & .175$\pm$.031 & .498$\pm$.006 & .381$\pm$.037 & .446$\pm$.024 & .415$\pm$.050 \\
    & WaveGAN & .217$\pm$.022 & .363$\pm$.012 & .491$\pm$.013 & .481$\pm$.034 & \textbf{.309}$\pm$\textbf{.039} \\
    & COT-GAN & .285$\pm$.030 & .498$\pm$.000 & .423$\pm$.001 & .411$\pm$.018 & .443$\pm$.014 \\
    \hline
    \parbox[t]{2mm}{\multirow{12}{*}{\rotatebox{90}{Pred. score}}} & TSGM-VP & .037$\pm$.000 & .257$\pm$.000 & \textbf{.005}$\pm$\textbf{.000} & .218$\pm$.000 & .022$\pm$.001 \\
    & TSGM-subVP & \textbf{.037}$\pm$\textbf{.000} & \textbf{.252}$\pm$\textbf{.000} & .005$\pm$.000 & \textbf{.217}$\pm$\textbf{.000} & \textbf{.022}$\pm$\textbf{.001} \\
    & T-Forcing & .038$\pm$.001 & .315$\pm$.005 & .008$\pm$.000 & .242$\pm$.001 & .029$\pm$.001 \\
    & P-Forcing & .043$\pm$.001 & .303$\pm$.006 & .021$\pm$.000 & .220$\pm$.000 & .070$\pm$.105 \\
    & TimeGAN & .038$\pm$.001 & .273$\pm$.004 & .017$\pm$.004 & .253$\pm$.002 & .057$\pm$.001 \\
    & RCGAN & .040$\pm$.001 & .292$\pm$.005 & .043$\pm$.000 & .224$\pm$.001 & .471$\pm$.002 \\
    & C-RNN-GAN & .038$\pm$.000 & .483$\pm$.005 & .111$\pm$.000 & .340$\pm$.006 & .242$\pm$.001 \\
    & TimeVAE & .042$\pm$.002 & .268$\pm$.004 & .013$\pm$.002 & .233$\pm$.010 & .035$\pm$.002 \\
    & WaveGAN & .041$\pm$.001 & .307$\pm$.007 & .009$\pm$.000 & .225$\pm$.006 & .034$\pm$.008 \\
    & COT-GAN & .044$\pm$.000 & .260$\pm$.000 & .024$\pm$.001 & .220$\pm$.000 & .084$\pm$.001 \\
    \cline{2-7}
    & Original & .036$\pm$.001 & .250$\pm$.003 & .004$\pm$.000 & .217$\pm$.000 & .019$\pm$.000 \\
    \hline
\end{tabular}
\caption{Experimental results in terms of the discriminative and predictive scores. The best scores are in boldface. The statistical significance ($p<0.05$) of the best results are confirmed in all cases by using the Wilcoxon rank sum test..}
\label{table2}
\end{table*}

\subsubsection{Evaluation Metrics}

In the image generation domain, researchers have evaluated the \textit{fidelity} and the \textit{diversity} of models by using the Fr\'echet inception distance (FID) and inception score (IS). On the other hand, to measure the fidelity and the diversity of synthesized time-series samples, we use the following predictive score and the discriminative score as in~\cite{yoon2019timegan}. We strictly follow the agreed evaluation protocol of~\cite{yoon2019timegan} by the time-series research community.  Both metrics are designed in a way that lower values are preferred. We run each generative method 10 times with different seeds, and report its mean and standard deviation of the following discriminative and predictive scores:

i) \textit{Predictive Score}: 
We use the predictive score to evaluate whether a generative model can successfully reproduce the temporal properties of the original data. To do this, we first train a naive LSTM-based sequence model for time-series forecasting with synthesized samples. The performance of this predictive model is measured as the mean absolute error (MAE) on the original test data. This kind of evaluation paradigm is called as train-synthesized-test-real (TSTR) in the literature.

ii) \textit{Discriminative Score}:
In order to assess how similar the original and generated samples are, we train a 2-layer LSTM model that classifies the real/fake samples into two classes, real or fake. We use the performance of the trained classifier on the test data as the discriminative score. Therefore, lower discriminator scores mean that real and fake samples are similar.

\begin{figure*}[t]
\centering
\includegraphics[width=0.9\textwidth]{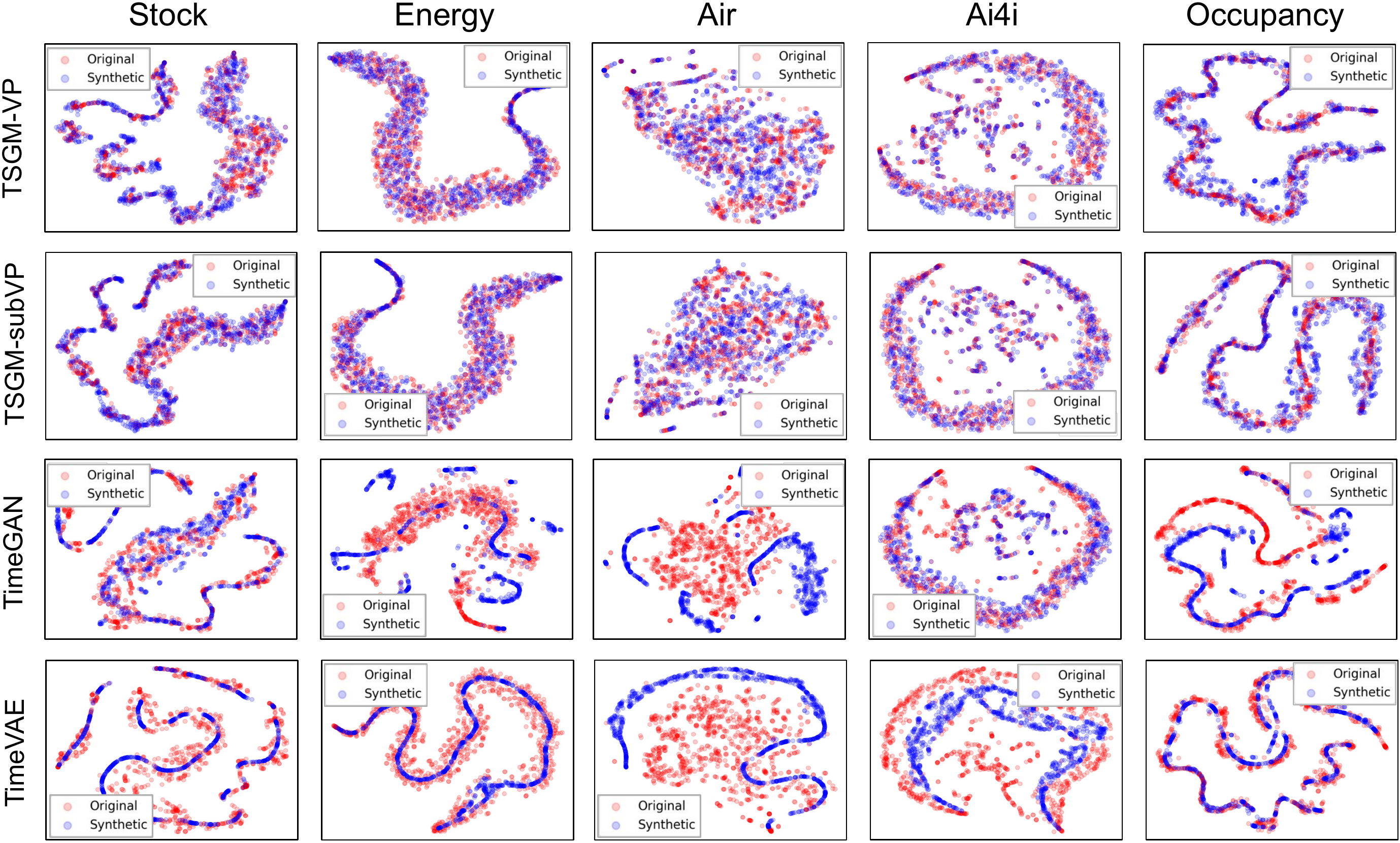} 
\caption{t-SNE plots for TSGM (1st and 2nd rows), TimeGAN (3rd row), and TimeVAE (4th row). Red and blue dots mean original and synthesized samples, respectively.}
\label{fig2}
\end{figure*}

\subsection{Experimental Results}
Table~\ref{table2} shows that achieves remarkable results, outperforming TimeGAN, a state-of-the-art method, except for only two cases: the discriminative scores on AI4I and Occupancy. Especially, for Stock, Energy, and Air, TSGM exhibits overwhelming performance by large margins for the discriminative score.
Moreover, for the predictive score, TSGM performs best and obtains almost the same scores with that of the original data, which indicates that generated samples from TSGM preserve almost all the predictive characteristics of the original data.

We also show t-SNE visualizations in Figure~\ref{fig2}. TimeGAN and TimeVAE are two representative VAE and GAN-based baselines, respectively. In the figure, the synthetic samples generated from TSGM consistently show successful recall from the original data. Therefore, TSGM generates diverse synthetic samples in comparison with TimeGAN and TimeVAE in all cases. Especially, TSGM achieves much higher diversity than the two models on Energy and Air.

\subsection{Ablation and Sensitivity Studies}

\subsubsection{Ablation study}
As an ablation study, we simultaneously train the conditional score network, encoder, and decoder from scratch. The results are in Table~\ref{table3}. There ablation models are worse than the full model, but they still outperform many baselines.

\begin{table}[t]
\centering
{\small
\begin{tabular}{c|c|c|c|c}
    \hline
    & Model & SDE & Stocks & Energy \\
    \hline
    \parbox[t]{2mm}{\multirow{4}{*}{\rotatebox{90}{Disc.}}} & \multirow{2}{*}{TSGM} & VP & .022$\pm$.005 & .221$\pm$.025 \\
    & & subVP & \textbf{.021}$\pm$\textbf{.008} & \textbf{.198}$\pm$\textbf{.025} \\
    \cline{2-5}
    & {\multirow{2}{*}{w/o pre-training}} & VP & .022$\pm$.004 & .322$\pm$.003 \\
    & & subVP & .059$\pm$.006 & .284$\pm$.004  \\
    \hline
    \parbox[t]{2mm}{\multirow{4}{*}{\rotatebox{90}{Pred.}}} & \multirow{2}{*}{TSGM} & VP & .037$\pm$.000  & .257$\pm$.000 \\
    & & subVP & \textbf{.037}$\pm$\textbf{.000} & .252$\pm$.000 \\
    \cline{2-5}
    & \multirow{2}{*}{w/o pre-training} & VP & .037$\pm$.000 & .252$\pm$.000 \\
    & & subVP & .037$\pm$.000 & \textbf{.251}$\pm$\textbf{.000} \\
    \hline
\end{tabular}
}
\caption{Comparison between with and without pre-training the autoencoder. For other omitted datasets, we can observe similar patterns.}
\label{table3}
\end{table}

\subsubsection{Sensitivity study} We conduct two sensitivity studies: i) reducing the depth of U-net, ii) decreasing the sampling step numbers. The results are in Table~\ref{table5}.

At first, we modify the depth of U-net from 4 to 3 to check the performance of the lighter conditional score network. Surprisingly, we achieve a better discriminative score with a slight loss on the predictive score.

Next, we decrease the number of sampling steps for faster sampling from 1000 steps to 500, 250, and 100 steps, respectively. For VP, we achieve almost the same results with 500 steps. Surprisingly, in the case of subVP, we achieve good results until 100 steps.

\subsection{Empirical Complexity Analyses} We also report the memory footprint and runtime for sampling fake data in Appendix for space reasons. In general, our method requires a longer sampling time than TimeGAN and some other simple methods.

\section{Conclusion \& Limitations}

We presented a score-based generative model framework for time-series generation. We combined an RNN-based autoencoder and our score network into a single framework to accomplish the goal. We also designed an appropriate denoising score matching loss for our generation task and achieved the state-of-the-art results on various datasets in terms of the discriminative and predictive scores. In addition, we conducted rigorous ablation and sensitivity studies to prove the efficacy of our model design.

Although our method achieves the state-of-the-art sampling quality and diversity, there exists a fundamental problem that all SGMs have. That is, SGMs are slower than GANs for generating samples. Since there are several accomplishments for faster sampling~\cite{xiao2022tackling,DBLP:journals/corr/abs-2105-14080}, however, one can apply them to our method and it would be much faster without any loss of sampling quality and diversity. One can also use other types of autoencoders instead of our RNN-based one. For instance, other continuous-time autoencoder methods are known to be able to process irregular time-series~\cite{NIPS2019_8773}. Our proposed score-based methods are naturally benefited by using those advanced autoencoders. In addition, we did not search miscellaneous hyperparameters by reusing the default values in TimeGAN~\cite{yoon2019timegan} and VPSDE~\cite{song2021SDE}, which means there still exists room to improve.

\begin{table}[h]
\centering
{\small
\begin{tabular}{c|c|c|c|c}
    \hline
    & Model & SDE & Stocks & Energy \\
    \hline
    \parbox[t]{2mm}{\multirow{10}{*}{\rotatebox{90}{Disc.}}} & \multirow{2}{*}{TSGM} & VP & .022$\pm$.005 & .221$\pm$.025 \\
    & & subVP & .021$\pm$.008 & .198$\pm$.025 \\
    \cline{2-5}
    & {\multirow{2}{*}{depth of 3}} & VP & .022$\pm$.004 & \textbf{.175}$\pm$\textbf{.009} \\
    & & subVP & \textbf{.020}$\pm$\textbf{.007} & .182$\pm$.009  \\
    \cline{2-5}
    & {\multirow{2}{*}{500 steps}} & VP & .025$\pm$.005 & .259$\pm$.003 \\
    & & subVP & .020$\pm$.004 & .248$\pm$.002  \\
    \cline{3-5}
    & {\multirow{2}{*}{250 steps}} & VP & .067$\pm$.009 & .250$\pm$.003 \\
    & & subVP & .022$\pm$.009 & .247$\pm$.002 \\
    \cline{3-5}
    & {\multirow{2}{*}{100 steps}} & VP & .202$\pm$.013 & .325$\pm$.003 \\
    & & subVP & .023$\pm$.005 & .237$\pm$.004  \\
    \hline
    \parbox[t]{2mm}{\multirow{10}{*}{\rotatebox{90}{Pred.}}} & \multirow{2}{*}{TSGM} & VP &.037$\pm$.000 & .257$\pm$.000 \\
    & & subVP &\textbf{.037}$\pm$\textbf{.000} & \textbf{.252}$\pm$\textbf{.000} \\
    \cline{2-5}
    & {\multirow{2}{*}{depth of 3}} & VP & .037$\pm$.000 & .253$\pm$.000 \\
    & & subVP &.037$\pm$.000 & .253$\pm$.000  \\
    \cline{2-5}
    & {\multirow{2}{*}{500 steps}} & VP & .037$\pm$.000 & .257$\pm$.000 \\
    & & subVP & .037$\pm$.000 & .253$\pm$.000 \\
    \cline{3-5}
    & {\multirow{2}{*}{250 steps}} & VP & .037$\pm$.000 & .256$\pm$.000 \\
    & & subVP & .037$\pm$.000 & .253$\pm$.000  \\
    \cline{3-5}
    & {\multirow{2}{*}{100 steps}} & VP & .039$\pm$.000 & .256$\pm$.000 \\
    & & subVP & .037$\pm$.000 & .253$\pm$.000  \\
    \hline
\end{tabular}
}
\caption{Sensitivity results on the depth of U-net and the number of sampling steps}
\label{table5}
\end{table}


\bibliography{aaai23}

\clearpage
\newpage
\renewcommand{\thesubsection}{\Alph{subsection}}
\setcounter{secnumdepth}{2}
\setcounter{table}{0}
\renewcommand{\thetable}{\Alph{table}}
\setcounter{figure}{0}
\renewcommand{\thefigure}{\Alph{figure}}
\setcounter{equation}{0}
\renewcommand{\theequation}{\alph{equation}}

\onecolumn
\twocolumn[
\centering
\Large
\textbf{TSGM: Time-series Generation using Score-based Generative Models} \\
\vspace{0.5em}-- Supplementary Material -- \\
\vspace{1.0em}
]
\appendix

\subsection{Proofs}

\begin{theorem}[Denoising score matching on time-series]
$l_{1}(t,s)$ can be replaced by the following $l_{2}(t,s)$:
\begin{footnotesize}
\begin{align*}
l_{2}(t,s) = \mathbb{E}_{\textbf{\textsc{x}}_t}\mathbb{E}_{{\textbf{\textsc{x}}}_{1:t}^s}\left[\left\|M_{\theta}(s,{\textbf{\textsc{x}}}_{1:t}^s,\textbf{\textsc{x}}_{1:t-1})-{\nabla}_{{\textbf{\textsc{x}}}_{1:t}^s}\text{log}p({\textbf{\textsc{x}}}_{1:t}^s|\textbf{\textsc{x}}_{1:t})\right\|_2^2 \right],
\end{align*}\end{footnotesize}where $\textbf{\textsc{x}}_t$ and ${\textbf{\textsc{x}}}_{1:t}^s$ are sampled from ${p}(\textbf{\textsc{x}}_t|\textbf{\textsc{x}}_{1:t-1})$ and ${p}({\textbf{\textsc{x}}}_{1:t}^s|\textbf{\textsc{x}}_{1:t})$. Therefore, we can use an alternative objective, $L_{2} = \mathbb{E}_{s}\mathbb{E}_{{\textbf{\textsc{x}}}_{1:T}}\left[\sum_{t=1}^{T}\lambda(s)l_{2}(t,s)\right]$ instead of $L_{1}$
\end{theorem}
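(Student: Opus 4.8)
The plan is to adapt the classical denoising-score-matching argument of \cite{vincent2011matching} to the conditional, recurrent setting, carrying the condition $\mathbf{x}_{1:t-1}$ through every expectation. The conceptual engine is a single marginalization identity: because $\mathbf{x}_{1:t-1}$ is a sub-vector of $\mathbf{x}_{1:t}$, conditioning the diffused sample on both $\mathbf{x}_{1:t}$ and $\mathbf{x}_{1:t-1}$ is the same as conditioning on $\mathbf{x}_{1:t}$ alone, so that
\begin{equation*}
p(\mathbf{x}_{1:t}^s \mid \mathbf{x}_{1:t-1}) = \int p(\mathbf{x}_{1:t}^s \mid \mathbf{x}_{1:t})\, p(\mathbf{x}_t \mid \mathbf{x}_{1:t-1})\, d\mathbf{x}_t .
\end{equation*}
This is the conditional analogue of the relation between the marginal diffused density and the transition kernel exploited by Vincent, with $\mathbf{x}_t$ (drawn from $p(\mathbf{x}_t\mid\mathbf{x}_{1:t-1})$) playing the role of the clean datum and $\mathbf{x}_{1:t-1}$ held fixed throughout.

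First I would expand the squared norm inside $l_1(t,s)$ into its three pieces: $\|M_\theta\|_2^2$, the cross term $-2\langle M_\theta,\, \nabla_{\mathbf{x}_{1:t}^s}\log p(\mathbf{x}_{1:t}^s\mid\mathbf{x}_{1:t-1})\rangle$, and the target norm $\|\nabla_{\mathbf{x}_{1:t}^s}\log p(\mathbf{x}_{1:t}^s\mid\mathbf{x}_{1:t-1})\|_2^2$, and do the same for $l_2(t,s)$ with the condition $\mathbf{x}_{1:t}$ in the score target. The first piece matches immediately: since $M_\theta(s,\mathbf{x}_{1:t}^s,\mathbf{x}_{1:t-1})$ does not depend on $\mathbf{x}_t$, the marginalization identity gives $\mathbb{E}_{\mathbf{x}_{1:t}^s}\|M_\theta\|_2^2 = \mathbb{E}_{\mathbf{x}_t}\mathbb{E}_{\mathbf{x}_{1:t}^s}\|M_\theta\|_2^2$. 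The third piece is independent of the parameters $\theta$ in both expressions, so it contributes only an additive constant irrelevant to the optimization.

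The crux is the cross term. Here I would follow Vincent's manipulation: write $\nabla\log p = (\nabla p)/p$ to cancel the sampling density, push the gradient through the marginalization integral (a standard regularity / dominated-convergence step), and re-express the outcome as $\nabla_{\mathbf{x}_{1:t}^s}\log p(\mathbf{x}_{1:t}^s\mid\mathbf{x}_{1:t})$ weighted by $p(\mathbf{x}_{1:t}^s\mid\mathbf{x}_{1:t})\,p(\mathbf{x}_t\mid\mathbf{x}_{1:t-1})$. This produces the $M_\theta$-weighted version of the identity quoted in the proof sketch,
\begin{equation*}
\mathbb{E}_{\mathbf{x}_{1:t}^s}\!\big\langle M_\theta,\, \nabla_{\mathbf{x}_{1:t}^s}\log p(\mathbf{x}_{1:t}^s\mid\mathbf{x}_{1:t-1})\big\rangle = \mathbb{E}_{\mathbf{x}_t}\mathbb{E}_{\mathbf{x}_{1:t}^s}\!\big\langle M_\theta,\, \nabla_{\mathbf{x}_{1:t}^s}\log p(\mathbf{x}_{1:t}^s\mid\mathbf{x}_{1:t})\big\rangle ,
\end{equation*}
so the cross terms of $l_1$ and $l_2$ coincide. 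Combining the three comparisons yields $l_1(t,s) = l_2(t,s) + C(t,s)$ with $C$ free of $\theta$; summing over $t$ and applying $\mathbb{E}_s\mathbb{E}_{\mathbf{x}_{1:T}}$ then shows $L_1$ and $L_2$ differ by a $\theta$-independent constant and hence share the same minimizer, which justifies the replacement.

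The main obstacle I anticipate is bookkeeping rather than depth: one must verify that \emph{every} expectation is genuinely taken under the condition $\mathbf{x}_{1:t-1}$, so the marginalization identity applies verbatim, and justify interchanging $\nabla_{\mathbf{x}_{1:t}^s}$ with the integral over $\mathbf{x}_t$. Both hold under the usual smoothness and integrability assumptions on the Gaussian transition kernel $p(\mathbf{x}_{1:t}^s\mid\mathbf{x}_{1:t})$ that follow from the affine drift of the forward SDE, exactly as in the unconditional case.
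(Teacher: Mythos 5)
Your proposal is correct and follows essentially the same route as the paper's own proof: the same three-part decomposition of $l_1(t,s)$, the same marginalization of the diffused-sample density over $\mathbf{x}_t$ to turn the condition $\mathbf{x}_{1:t-1}$ into $\mathbf{x}_{1:t}$, the same Vincent-style manipulation of the cross term, and the same observation that the $\left\|M_{\theta}\right\|_2^2$ term transfers because $M_{\theta}$ does not depend on $\mathbf{x}_t$, yielding $l_1 = l_2 + C$ with $C$ independent of $\theta$. The only cosmetic difference is that the paper disposes of $t=1$ separately by direct appeal to Vincent's classical result (since $\mathbf{x}_0=\mathbf{0}$), whereas your argument treats all $t$ uniformly; you also make explicit the gradient--integral interchange that the paper leaves implicit.
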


\textit{proof}. At first, if $t=1$, it can be substituted with the naive denoising score loss by~\cite{vincent2011matching} since $\textbf{x}_0 = \textbf{0}$. 

Next, let us consider $t>1$. $l_{1}(t,s)$ can be decomposed as follows:
\begin{footnotesize}\begin{align*}
&l_{1}(t,s) = -2\cdot\mathbb{E}_{{\textbf{x}}_{1:t}^s}\langle M_{\theta}(s,{\textbf{x}}_{1:t}^s,\textbf{x}_{1:t-1}),{\nabla}_{{\textbf{x}}_{1:t}^s}\text{log}p({\textbf{x}}_{1:t}^s|\textbf{x}_{1:t-1})\rangle\\
&+\mathbb{E}_{{\textbf{x}}_{1:t}^s}\left[\left\|M_{\theta}(s,{\textbf{x}}_{1:t}^s,\textbf{x}_{1:t-1})\right\|_2^2\right]+C_1
\end{align*}\end{footnotesize}

Here, $C_1$ is a constant that does not depend on the parameter $\theta$, and $\langle\cdot,\cdot\rangle$ mean the inner product. Then, the first part's expectation of the right-hand side can be expressed as follows:
\begin{scriptsize}
\begin{align*}
&\mathbb{E}_{{\textbf{x}}_{1:t}^s}[\langle M_{\theta}(s,{\textbf{x}}_{1:t}^s,\textbf{x}_{1:t-1}),{\nabla}_{{\textbf{x}}_{1:t}^s}\text{log}p({\textbf{x}}_{1:t}^s|\textbf{x}_{1:t-1})\rangle ]\\
&=\int_{{\textbf{x}}_{1:t}^s}\langle M_{\theta}(s,{\textbf{x}}_{1:t}^s,\textbf{x}_{1:t-1}),{\nabla}_{{\textbf{x}}_{1:t}^s}\text{log}p({\textbf{x}}_{1:t}^s|\textbf{x}_{1:t-1})\rangle\text{p}({\textbf{x}}_{1:t}^s|\textbf{x}_{1:t-1})d{\textbf{x}}_{1:t}^s\\
&=\int_{{\textbf{x}}_{1:t}^s}\langle M_{\theta}(s,{\textbf{x}}_{1:t}^s,\textbf{x}_{1:t-1}),\frac{1}{\text{p}(\textbf{x}_{1:t-1})}{\partial\text{p}({\textbf{x}}_{1:t}^s,\textbf{x}_{1:t-1})\over\partial {\textbf{x}}_{1:t}^s}\rangle d{\textbf{x}}_{1:t}^s\\
&=\int_{{\textbf{x}}_{t}}\int_{{\textbf{x}}_{1:t}^s}\langle M_{\theta}(s,{\textbf{x}}_{1:t}^s,\textbf{x}_{1:t-1}),\frac{1}{\text{p}(\textbf{x}_{1:t-1})}{\partial\text{p}({\textbf{x}}_{1:t}^s,\textbf{x}_{1:t-1},\textbf{x}_t)\over\partial {\textbf{x}}_{1:t}^s}\rangle d{\textbf{x}}_{1:t}^sd\textbf{x}_t\\
&=\int_{{\textbf{x}}_{t}}\int_{{\textbf{x}}_{1:t}^s}\langle M_{\theta}(s,{\textbf{x}}_{1:t}^s,\textbf{x}_{1:t-1}),{\partial\text{p}({\textbf{x}}_{1:t}^s|\textbf{x}_{1:t}))\over\partial {\textbf{x}}_{1:t}^s}\rangle\frac{\text{p}(\textbf{x}_{1:t-1},\textbf{x}_t)}{\text{p}(\textbf{x}_{1:t-1})}d{\textbf{x}}_{1:t}^sd\textbf{x}_t\\
&=\int_{{\textbf{x}}_{t}}\int_{{\textbf{x}}_{1:t}^s}\langle M_{\theta}(s,{\textbf{x}}_{1:t}^s,\textbf{x}_{1:t-1}),{\partial\text{p}({\textbf{x}}_{1:t}^s|\textbf{x}_{1:t})\over\partial {\textbf{x}}_{1:t}^s}\rangle\text{p}(\textbf{x}_t|\textbf{x}_{1:t-1})d{\textbf{x}}_{1:t}^sd\textbf{x}_t\\
&=\mathbb{E}_{\textbf{x}_t}\left[\int_{{\textbf{x}}_{1:t}^s}\langle M_{\theta}(s,{\textbf{x}}_{1:t}^s,\textbf{x}_{1:t-1}),{\partial\text{p}({\textbf{x}}_{1:t}^s|\textbf{x}_{1:t})\over\partial {\textbf{x}}_{1:t}^s}\rangle d{\textbf{x}}_{1:t}^s\right]\\
&=\mathbb{E}_{\textbf{x}_t}\left[\int_{{\textbf{x}}_{1:t}^s}\langle M_{\theta}(s,{\textbf{x}}_{1:t}^s,\textbf{x}_{1:t-1}),{\nabla}_{{\textbf{x}}_{1:t}^s}\text{log}p({\textbf{x}}_{1:t}^s|\textbf{x}_{1:t})\rangle\text{p}({\textbf{x}}_{1:t}^s|\textbf{x}_{1:t})d{\textbf{x}}_{1:t}^s\right]\\
&=\mathbb{E}_{\textbf{x}_t}\mathbb{E}_{{\textbf{x}}_{1:t}^s}[\langle M_{\theta}(s,{\textbf{x}}_{1:t}^s,\textbf{x}_{1:t-1}),{\nabla}_{{\textbf{x}}_{1:t}^s}\text{log}p({\textbf{x}}_{1:t}^s|\textbf{x}_{1:t})\rangle]
\end{align*}
\end{scriptsize}



Similarly, the second part's expectation of the right-hand side can be rewritten as follows:
\begin{scriptsize}\begin{align*}
&\mathbb{E}_{{\textbf{x}}_{1:t}^s}[\left\|M_{\theta}(s,{\textbf{x}}_{1:t}^s,\textbf{x}_{1:t-1})\right\|_2^2]\\
&=\int_{{\textbf{x}}_{1:t}^s}\left\|M_{\theta}(s,{\textbf{x}}_{1:t}^s,\textbf{x}_{1:t-1})\right\|_2^2\cdot\text{p}({\textbf{x}}_{1:t}^s|\textbf{x}_{1:t-1})d{\textbf{x}}_{1:t}^s\\
&=\int_{\textbf{x}_t}\int_{{\textbf{x}}_{1:t}^s}\left\|M_{\theta}(s,{\textbf{x}}_{1:t}^s,\textbf{x}_{1:t-1})\right\|_2^2\cdot\frac{\text{p}({\textbf{x}}_{1:t}^s,\textbf{x}_{1:t-1},\textbf{x}_t)}{\text{p}(\textbf{x}_{1:t-1})}d{\textbf{x}}_{1:t}^sd\textbf{x}_t\\
&=\int_{\textbf{x}_t}\int_{{\textbf{x}}_{1:t}^s}\left\|M_{\theta}(s,{\textbf{x}}_{1:t}^s,\textbf{x}_{1:t-1})\right\|_2^2\cdot\text{p}({\textbf{x}}_{1:t}^s|\textbf{x}_{1:t})\frac{\text{p}(\textbf{x}_{1:t-1},\textbf{x}_t)}{\text{p}(\textbf{x}_{1:t-1})}d{\textbf{x}}_{1:t}^sd\textbf{x}_t\\
&=\int_{\textbf{x}_t}\int_{{\textbf{x}}_{1:t}^s}\left\|M_{\theta}(s,{\textbf{x}}_{1:t}^s,\textbf{x}_{1:t-1})\right\|_2^2\cdot\text{p}({\textbf{x}}_{1:t}^s|\textbf{x}_{1:t})\text{p}(\textbf{x}_t|\textbf{x}_{1:t-1})d{\textbf{x}}_{1:t}^sd\textbf{x}_t\\
&=\mathbb{E}_{{\textbf{x}}_{t}}\mathbb{E}_{{\textbf{x}}_{1:t}^s}[\left\|M_{\theta}(s,{\textbf{x}}_{1:t}^s,\textbf{x}_{1:t-1})\right\|_2^2]
\end{align*}\end{scriptsize}

Finally, by using above results, we can derive following result:
\begin{footnotesize}\begin{align*}
&l_{1} = \mathbb{E}_{{\textbf{x}}_{t}}\mathbb{E}_{{\textbf{x}}_{1:t}^s}\left[\left\|M_{\theta}(s,{\textbf{x}}_{1:t}^s,\textbf{x}_{1:t-1})\right\|_2^2\right]+C_1\\
&-2\cdot\mathbb{E}_{{\textbf{x}}_{t}}\mathbb{E}_{{\textbf{x}}_{1:t}^s}\langle M_{\theta}(s,{\textbf{x}}_{1:t}^s,\textbf{x}_{1:t-1}),{\nabla}_{{\textbf{x}}_{1:t}^s}\text{log}p({\textbf{x}}_{1:t}^s|\textbf{x}_{1:t})\rangle\\
&=\mathbb{E}_{\textbf{x}_t}\mathbb{E}_{{\textbf{x}}_{1:t}^s}\left[\left\|M_{\theta}(s,{\textbf{x}}_{1:t}^s,\textbf{x}_{1:t-1})-{\nabla}_{{\textbf{x}}_{1:t}^s}\text{log}p({\textbf{x}}_{1:t}^s|\textbf{x}_{1:t})\right\|_2^2 \right]+C
\end{align*}\end{footnotesize}

$C$ is a constant that does not depend on the parameter $\theta$. \hfill $\square$

\begin{corollary} Our target objective function, $L_{score}$, is defined as follows:
\begin{small}
\begin{align*}
L_{score} = \mathbb{E}_{s}\mathbb{E}_{\textbf{x}_{1:T}}\left[\sum_{t=1}^{T}\lambda(s)l^\star_2(t, s) \right],
\end{align*}where
\begin{align*}
l^\star_{2}(t,s) = \mathbb{E}_{{\textbf{\textsc{x}}}_{1:t}^s}\left[\left\|M_{\theta}(s,{\textbf{\textsc{x}}}_{1:t}^s,\textbf{\textsc{x}}_{1:t-1})-{\nabla}_{{\textbf{\textsc{x}}}_{1:t}^s}\text{log}p({\textbf{\textsc{x}}}_{1:t}^s|\textbf{\textsc{x}}_{1:t})\right\|_2^2\right].
\end{align*}
\end{small}Then, $L_{2}=L_{score}$ is satisfied.
\end{corollary}

\textit{proof.} Whereas one can use the law of total expectation, which means \textit{$E[X]=E[E[X|Y]]$ if X,Y are on an identical probability space} to show the above formula, we calculate directly. At first, let us simplify the expectation of the inner part with a symbol $f(\textbf{x}_{1:t})$ for our computational convenience, i.e.,
$f(\textbf{x}_{1:t})=\mathbb{E}_{s}\mathbb{E}_{{\textbf{x}}_{1:t}^s}\left[\lambda(s)\left\|M_{\theta}(s,{\textbf{x}}_{1:t}^s,\textbf{x}_{1:t-1})-{\nabla}_{{\textbf{x}}_{1:t}^s}\text{log}p({\textbf{x}}_{1:t}^s|\textbf{x}_{1:t})\right\|_2^2\right]$. Then we have the following definition:
\begin{scriptsize}\begin{align*}
&L_{2} = \mathbb{E}_{s}\mathbb{E}_{\textbf{x}_{1:T}}\left[l_2\right]
=\mathbb{E}_{\textbf{x}_{1:T}}\left[\sum_{t=1}^{T}\mathbb{E}_{{\textbf{x}}_{t}}[f(\textbf{x}_{1:t})]\right]=\sum_{t=1}^{T}\mathbb{E}_{\textbf{x}_{1:T}}\mathbb{E}_{{\textbf{x}}_{t}}[f(\textbf{x}_{1:t})]
\end{align*}\end{scriptsize}

At last, the expectation part can be further simplified as follows:
\begin{scriptsize}\begin{align*}
&\mathbb{E}_{\textbf{x}_{1:T}}\mathbb{E}_{{\textbf{x}}_{t}}[f(\textbf{x}_{1:t})]\\
&=\int_{\textbf{x}_{1:T}}\int_{\textbf{x}_t}f(\textbf{x}_{1:t})p(\textbf{x}_{t}|\textbf{x}_{1:t-1})d\textbf{x}_t\cdot p(\textbf{x}_{1:t-1})p(\textbf{x}_{t:T}|\textbf{x}_{1:t-1})d\textbf{x}_{1:T}\\
&=\int_{\textbf{x}_{1:T}}\int_{\textbf{x}_t}f(\textbf{x}_{1:t})p(\textbf{x}_{1:t})d\textbf{x}_t\cdot p(\textbf{x}_{t:T}|\textbf{x}_{1:t-1})d\textbf{x}_{1:T}\\
&=\int_{\textbf{x}_{t:T}}\left({\int_{\textbf{x}_{1:t}}}f(\textbf{x}_{1:t})p(\textbf{x}_{1:t})d\textbf{x}_{1:t}\right)p(\textbf{x}_{t:T}|\textbf{x}_{1:t-1})d\textbf{x}_{t:T}\\
&=\int_{\textbf{x}_{1:t}}f(\textbf{x}_{1:t})p(\textbf{x}_{1:t})d\textbf{x}_{1:t}\\
&=\int_{\textbf{x}_{1:t}}\left(\int_{\textbf{x}_{t+1:T}}p(\textbf{x}_{t+1:T}|\textbf{x}_{1:t})d\textbf{x}_{t+1:T}\right)f(\textbf{x}_{1:t})p(\textbf{x}_{1:t})d\textbf{x}_{1:t}\\
&=\int_{\textbf{x}_{1:T}}f(\textbf{x}_{1:t})p(\textbf{x}_{1:T})d\textbf{x}_{1:T}\\
&=\mathbb{E}_{\textbf{x}_{1:T}}[f(\textbf{x}_{1:t})]
\end{align*}\end{scriptsize}


Since $\sum_{t=1}^T\mathbb{E}_{\textbf{x}_{1:T}}[f(\textbf{x}_{1:t})]=\mathbb{E}_{\textbf{x}_{1:T}}[\sum_{t=1}^T f(\textbf{x}_{1:t})]=L_{score}$, we prove the corollary. \hfill $\square$

\subsection{Datasets and Baselines}

We use 5 datasets from various fields as follows. We summarize their data dimensions, the number of training samples, and their time-series lengths (window sizes) in Table~\ref{table1}.

\begin{itemize}
    \item \textit{Stock}~\cite{yoon2019timegan}: The Google stock dataset was collected irregularly from 2004 to 2019. Each observation has (volume, high, low, opening, closing, adjusted closing prices), and these features are correlated.
   \item \textit{Energy}~\cite{candanedo2017energy}:
   This dataset is from the UCI machine learning repository for predicting the energy use of appliances from highly correlated variables such as house temperature and humidity conditions.
    \item \textit{Air}~\cite{devito2008air}: The UCI Air Quality dataset were collected from 2004 to 2005. Hourly averaged air quality records are gathered using gas sensor devices in an Italian city.
    \item \textit{AI4I}~\cite{matzka2020}: AI4I means the UCI AI4I 2020 Predictive Maintenance dataset. This data reflects the industrial predictive maintenance scenario with correlated features including several physical quantities.
    \item \textit{Occupancy}~\cite{candanedo2016occupancy}: The UCI Room Occupancy Estimation dataset contains features such as temperature, light, and CO2 collected from multiple environmental sensors.
\end{itemize}

We use several types of generative methods for time-series as baselines. At first, we consider autoregressive generative methods: T-Forcing (teacher forcing)~\cite{graves2013tforcing,ilya2011tforcing} and P-Forcing (professor forcing)~\cite{lamb2016pforcing}. 
Next, we use GAN-based methods: TimeGAN~\cite{yoon2019timegan}, RCGAN~\cite{esteban2017rcgan}, C-RNN-GAN~\cite{mogren2016crnngan}, COT-GAN~\cite{xu2020cotgan}, WaveGAN~\cite{donahue2019wavegan}. Finally, we also test VAE-based methods into our baselines: TimeVAE~\cite{desai2021timevae}.

\begin{table}[h]
\centering
\begin{tabular}{c|c|c|c}
    \hline
    Dataset & Dimension & \#Samples & Length \\
    \hline
    Stocks & 6 & 3685 & \multirow{5}{1em}{24}\\
    Energy & 28 & 19735 &\\
    Air & 13 & 9357 &\\
    AI4I & 5 & 10000 &\\
    Occupancy & 13 & 10129 &\\
    \hline
\end{tabular}
\caption{Characteristics of the datasets we use for our experiments}
\label{table1}
\end{table}

\subsection{Search Space for Hyperparameters}

We give our search space for hyperparameters. $iter_{pre}$, is in \{50000,100000\}. The dimension of hidden features, $d_{hidden}$, ranges from 2 times to 5 times the dimension of input features. For other settings, we follow the default values in TimeGAN~\cite{yoon2019timegan} and VPSDE~\cite{song2021SDE}.

\subsection{Miscellaneous Experimental Environments}

We give detailed experimental environments. The following software and hardware environments were used for all experiments: \textsc{Ubuntu} 18.04 LTS, \textsc{Python} 3.9.12, \textsc{CUDA} 9.1, and \textsc{NVIDIA} Driver 470.141, and i9 CPU, and \textsc{GeForce RTX 2080 Ti}.

On the experiments, We report only the VP and subVP-based TSGM and exclude the VE-based one for its lower performance. For baselines, we reuse their released source codes in their official repositories and rely on their designed training and model selection procedures. For our method, we select the best model every 5000 iteration. For this, we synthesize samples, and calculate the mean and standard deviation scores of the discriminative and predictive scores.

\subsection{Empirical Space and Time Complexity Analyses}

We report the memory usage during training in Table~\ref{table7} and the wall-clock time for generating 1,000 time-series samples in Table~\ref{table8}. We compare only with TimeGAN~\cite{yoon2019timegan} since is the state-of-the-art method. Our method is relatively slower than TimeGAN, which is a fundamental drawback of all SGMs. For example, the original score-based model~\cite{song2021SDE} requires 3,214 seconds for sampling 1,000 CIFAR-10 images while StyleGAN~\cite{DBLP:journals/corr/abs-1912-04958} needs 0.4 seconds. However, we also emphasize that this problem can be relieved by using the techniques suggested in~\cite{xiao2022tackling,DBLP:journals/corr/abs-2105-14080} as we mentioned in the conclusion section.


\begin{table}[h]
\centering
\begin{tabular}{c|c|c}
    \hline
    Model & Stock & Energy \\
    \hline
    TimeGAN & 1.1(GB) & 1.6(GB)\\
    TSGM & 3.8(GB) & 3.9(GB)\\
    \hline
\end{tabular}
\caption{The memory usage for training}
\label{table7}
\end{table}

\begin{table}[h]
\centering
\begin{tabular}{c|c|c}
    \hline
    Model & Stocks & Energy \\
    \hline
    TSGM & 3318.99(s) & 1620.84(s)\\
    TimeGAN & 0.43(s) & 0.47(s)\\
    \hline
\end{tabular}
\caption{The sampling time of TSGM and TimeGAN for generating 1,000 samples on each dataset. The original score-based model~\cite{song2021SDE} requires 3,214 seconds for sampling 1000 CIFAR-10 images while StyleGAN~\cite{DBLP:journals/corr/abs-1912-04958} needs 0.4 seconds, which is a similar ratio to our results.}
\label{table8}
\end{table}

\end{document}